\documentclass[letterpaper, 10 pt, conference]{ieeeconf}
\IEEEoverridecommandlockouts  
\usepackage{cite}

\let\labelindent\relax 

\usepackage{amsmath,amssymb,amsfonts}
\usepackage{algorithmic}
\usepackage{graphicx}
\usepackage{textcomp}
\usepackage{xcolor}
\def\BibTeX{{\rm B\kern-.05em{\sc i\kern-.025em b}\kern-.08em
    T\kern-.1667em\lower.7ex\hbox{E}\kern-.125emX}}
\usepackage{amsmath, amsthm}
\usepackage{romannum}
\usepackage{mathrsfs}
\usepackage{float}
\usepackage{romannum}
\usepackage{hyperref}
\newtheorem*{example}{Example}
\newtheorem{ass}{Assumption}
\newtheorem{problem}{Problem}

\newtheorem{theorem}{Theorem}

\newtheorem{definition}{Definition}
\usepackage{bm}
\usepackage[ruled,vlined]{algorithm2e}
\usepackage{wrapfig}
\usepackage{subcaption}
\usepackage{multirow} 
\usepackage{enumitem}

\begin{document}

\title{\LARGE \bf Temporal-Logic-Aware Frontier-Based Exploration\\
}

\author{Azizollah Taheri and Derya Aksaray
\thanks{A. Taheri is a PhD student in the Department of Electrical and Computer Engineering at Northeastern University.}
\thanks{D. Aksaray is an Assistant Professor in the Department of Electrical and Computer Engineering at Northeastern University.}
}

\maketitle
\begin{abstract}
This paper addresses the problem of temporal logic motion planning for an autonomous robot operating in an unknown environment. The objective is to enable the robot to satisfy a syntactically co-safe Linear Temporal Logic (scLTL) specification when the exact locations of the desired labels are not known a priori. We introduce a new type of automaton state, referred to as \emph{commit states}. These states capture intermediate task progress resulting from actions whose consequences are irreversible. In other words, certain future paths to satisfaction become not feasible after taking those actions that lead to the commit states. By leveraging commit states, we propose a sound and complete frontier-based exploration algorithm that 
strategically guides the robot to make progress toward the task while preserving all possible ways of satisfying it. The efficacy of the proposed method is validated through simulations.
\end{abstract}


\section{Introduction}
Exploration is a fundamental problem that is concerned with efficiently guiding a mobile robot through an unknown environment to construct a map\cite{stachniss2009robotic, burgard2005coordinated}. A widely adopted strategy to address this problem is frontier-based exploration, where the robot selects boundaries between known and unknown regions (frontiers) as candidate targets for exploration \cite{yamauchi1997frontier}. Beyond the frontier-based exploration approach, the problem has been addressed using alternative methods, which include information-theoretic techniques (e.g., \cite{bourgault2002information}) and reinforcement learning (e.g., \cite{li2019deep}).

The case of satisfying a task while exploring the environment is more challenging, as it requires guiding the exploration 
under the task's constraints \cite{guo2023recent, zhao2023explore}.
Existing methods rely on an implicit assumption that any progress toward task satisfaction is beneficial. This is not always true. Some tasks offer multiple possible paths to satisfaction, but committing early to one option may preclude the possibility of satisfying the task through the other, which could lead to failure.

Consider a scenario where a search-and-rescue robot is deployed in a post-disaster environment with the mission of locating survivors, identifying safe exits, and determining viable exit routes. The terrain is unknown, hazardous, and non-reversible in some areas (e.g., the robot may be able to traverse downhill slopes but not climb back up due to debris, unstable structures, or slippery inclines). One strategy is to fully explore the disaster zone before initiating any rescues, which ensures complete knowledge of survivor locations and safe exits. While this helps planning with certainty, it will not be suitable for time critical missions. 
An alternative is to begin rescues as soon as sufficient local information becomes available, such as moving toward the survivor once it is observed (as in existing works). While this may potentially lead to faster mission success, premature commitments may have undesirable consequences. For example, if the robot proceeds toward a survivor via a one-way downhill path without first confirming the presence of a safe exit beyond it, it may become trapped. Therefore, the robot must distinguish between actions that cause irreversible consequences and those that carry no long-term effects. Such distinction enables the robot to balance timely progress with the preservation of future options.
There are other scenarios where completing a subtask may influence the feasibility of completing other subtasks. These observations highlight the need for more sophisticated planning strategies, rather than simply avoiding task violations and treating all progress as beneficial.

In this paper, we address the problem of satisfying a syntactically co-safe Linear Temporal Logic (scLTL) specification in an unknown discrete environment without any prior information about the location of desired regions. Our method uses a frontier-based exploration strategy. Unlike existing approaches, it computes paths to frontiers in the product space instead of the physical space. 
Computing the paths over the product space enables the robot to strategically assess which forms of progress are beneficial for task satisfaction and which may be risky or lead to negative consequences in the future. As a result, our approach offers improved performance along with guarantees for task satisfaction and safety, even in cases where state-of-the-art methods fail.

\section{Related Work}

The problem of planning in unknown environments can be broadly categorized into two groups. The first group includes cases where some prior information about the location of desired regions or objects is available (e.g.,\cite{kantaros2022perception, taheri2025motionplanningtemporallogic, taheri2026temporal}). These works leverage available information to generate an optimized plan for accomplishing the mission. The second group addresses scenarios where no prior knowledge of the location of desired regions is available (e.g., \cite{lamanna2021online,xu2022framework,ayala2013temporal,grover2021semantic}). In \cite{lamanna2021online} and \cite{xu2022framework}, the authors focus on tasks modeled using the Planning Domain Definition Language (PDDL). In \cite{lamanna2021online}, the authors propose a method in which the robot first explores the environment to gather sufficient information and then generates a plan to achieve the goal. In contrast, \cite{xu2022framework} argues that this sequential approach is inefficient, as it may cause the robot to revisit previously explored areas. They suggest that once some relevant information is discovered, the robot should begin progressing toward the goal, even if a complete path is not yet known, since this can improve the overall cost of the resulting plan. 

In \cite{ayala2013temporal} and \cite{grover2021semantic}, the authors address planning in unknown environments, where the task is specified as an scLTL formula. 
The most related work to our paper is \cite{ayala2013temporal}, which proposes a frontier-based exploration strategy for discrete environments. In their approach, the robot computes the shortest path to the frontier over the physical space and checks whether following that path would violate the task. If so, the frontier is discarded. However, this approach prioritizes immediate progress and overlook longer alternative paths that do not violate the specification. In contrast, our method computes paths over the product space, which directly incorporates task constraints and allows the robot to anticipate potential future limitations while planning. 
In \cite{grover2021semantic}, a sampling-based method is used to solve the same problem in continuous spaces. It similarly promotes incremental task progress and demonstrates improved performance compared to strategies that fully explore the environment before planning.
Nevertheless, a critical gap remains: current methods cannot simultaneously (1) make progress toward task completion during exploration and (2) assess the long-term consequences of such progress. Without this capability, premature commitments may lead to dead ends, which result in wasted resources and potential deadlocks in accomplishing the task.

\section{Preliminaries}
\subsection{Transition System}

We define a transition system (TS) to model the robot’s motion in an unknown environment as follows:  

\begin{definition}[Weighted Deterministic Transition System]
A weighted deterministic transition system (TS) is a tuple  $\mathcal{T} = (X, \Sigma, \delta, O, L,w)$, where:
\begin{itemize}
    \item $X$ is a finite set of states,
    \item $\Sigma$ is a finite set of actions,
    \item $\delta: X \times \Sigma \rightarrow X$ is a deterministic transition function,
    \item $O$ is a finite set of observations,
    \item $L: X \rightarrow 2^O$ is a labeling function that assigns a subset of observations to each state,
    \item $w: X \times \Sigma \rightarrow \mathbb{N}$ is the weight function, which assigns a positive value to each state–action pair.
\end{itemize}
\end{definition}

Given an TS $\mathcal{T}$, a finite \textbf{action sequence} is denoted by $
\bm{\sigma{\scriptstyle[0:n]}} = \sigma(0)\sigma(1)\dots\sigma(n),$
where $\sigma(i) \in \Sigma$ for all $i \in \{0,1,\dots,n\}$. A finite \textbf{trajectory} induced by $\bm{\sigma{\scriptstyle[0:n]}}$ is represented as 
$
\bm{x^{\sigma{\tiny[0:n]}}} = x(0)x(1)\dots x(n+1),
$
where $x(i+1) \in \delta(x(i),\sigma(i))$. The corresponding \textbf{word} is given by 
$
\bm{l(\bm{x^{\sigma{\tiny[0:n]}}})} = l(0)l(1)\dots l(n+1),
$
where $l(i) = L(x(i))$. The \textbf{total weight} of the trajectory $\bm{x^{\sigma{\tiny[0:n]}}}$ is defined as 
$
W(\bm{x^{\sigma{\tiny[0:n]}}}) = \sum_{i=0}^{n} w(x(i), \sigma(i)).
$

\subsection{Temporal logic}
Temporal Logic (TL) is a formal language for specifying the temporal properties of dynamical systems. Among its variants, Linear Temporal Logic (LTL) is widely used to express properties over infinite-length words 
$
l = l(0)l(1)l(2)\dots,
$
where $l(i) \in 2^O$ for all $i \geq 0$. LTL's expressive capabilities have led to its widespread use in domains like control synthesis and system verification for complex tasks (e.g., \cite{4459804}, \cite{plaku2015motion}). We concentrate on a subset of LTL called scLTL in this paper.

\begin{definition}[$\text{scLTL}_{\setminus{\emph{next}}}$]  
A formula $\phi$ in syntactically co-safe Linear Temporal Logic ($\text{scLTL}_{\setminus{\emph{next}}}$), defined on the observation set $O$, is generated recursively as follows:

\begin{equation*}
\phi = \top\mid o\mid \neg o\mid \phi_1\land\phi_2\mid\phi_1\lor\phi_2\mid\phi_1\mathcal{U}\phi_2\mid\lozenge\phi_1
\end{equation*}
where $o \in O$ denotes an observation, and $\phi$, $\phi_1$ and $\phi_2$ represent $\text{scLTL}_{\setminus\emph{next}}$ formulae. The Boolean operators used are $\top$ (true), $\neg$ (negation), $\land$ (conjunction) and $\lor$ (disjunction). The temporal operators include $\mathcal{U}$ (until) and $\lozenge$ (eventually).

\end{definition}

This language does not support the globally operator, as $\text{scLTL}_{\setminus{\emph{next}}}$ restricts negation to observations. As a result, formulae such as $\neg\lozenge\neg\phi$ are not part of $\text{scLTL}_{\setminus{\emph{next}}}$. The \emph{next} operator is omitted from the syntax (e.g., \cite{kantaros2020reactive}) because specifying objectives such as locating an object with an unknown position within $t\in \mathbb{N}$ steps could impose constraints that are too restrictive in unknown environments. For brevity, we refer to $\text{scLTL}_{\setminus{\emph{next}}}$ simply as scLTL throughout the paper.

The evaluation of the semantics of scLTL formulae is done over infinite words where each element in the word is from $2^{O}$. The language of a formula $\phi$ is denoted by $\mathscr{L}_{\phi}$, which is the set of infinite words satisfying $\phi$. Despite being interpreted over infinite words (i.e., $(2^{O})^{\omega}$\footnote{$(2^{O})^{\omega}$ denotes the set of infinite words defined over $2^O$.}), the satisfaction of scLTL formulae can always be decided in finite time. Specifically, for any infinite word $l = l(0)l(1)l(2)\dots \in \mathscr{L}_{\phi}$, there exists a finite \textbf{good} prefix $l(0)l(1)\dots l(n)$ such that any infinite extension of this prefix, $l(0)l(1)\dots l(n)l'$ with $l' \in (2^{O})^{\omega}$, also satisfies $\phi$~\cite{belta2017formal}. The set of all such good prefixes is denoted by $\mathscr{L}_{\text{pref},\phi}$. For every scLTL formula, there exists a corresponding deterministic finite state automaton (DFA) that compactly encodes all words satisfying the formula.

\begin{definition}[DFA] 
We define a deterministic finite state automaton as the tuple $\mathcal{A} = (S, s_0, 2^{O}, \delta_{a}, F_{a})$, where:
\begin{itemize}

    \item $S$ denotes a finite set of states,
    \item $s_0 \in S$ represents the initial state,
    \item $2^O$ is the input alphabet,
    \item $\delta_{a}: S \times 2^O \rightarrow S$ defines the transition function,
    \item $F_a \subseteq S$ denotes the set of accepting states.

\end{itemize}
\label{DFA}
\end{definition}

The semantics of a DFA are interpreted on finite words over the alphabet $2^{O}$. For a DFA $\mathcal{A}$, a run on a word $\bm{l} = l(0)l(1)\dots l(n)$ is a sequence of states $\bm{s} = s(0)s(1)\dots s(n+1)$ such that $s(0) = s_0$, $s(i) \in S$, and $s(i+1) = \delta_{a}(s(i), l(i))$ for each $i \geq 0$. A word $\bm{l}$ is accepted if the final state of the run is in the accepting set, i.e., $s(n+1) \in F_a$. The collection of all such words forms the language of $\mathcal{A}$, denoted by $\mathscr{L}_{\mathcal{A}}$. Every scLTL formula $\phi$ over $O$ can be translated into a DFA $\mathcal{A}_\phi$ with alphabet $2^{O}$, which accepts precisely the language of good prefixes of $\phi$ (i.e.,  $\mathscr{L}_{\mathcal{A}_\phi} = \mathscr{L}_{\text{pref},\phi}$)~\cite{belta2017formal}. While this DFA encodes accepting prefixes, an extended version can be built to also capture violations.

\begin{definition}[Total DFA]
A DFA is said to be total if, for every state $s \in S$ and every input $l \in 2^{O}$, the transition function satisfies $\delta_{a}(s, l) \neq \emptyset$~\cite{lin2015hybrid}.
\end{definition}

Given any DFA $\mathcal{A}$, it is always possible to obtain a language-equivalent total DFA by adding a trash state $s_t$ and defining $\delta_{a}(s, l) = s_t$ for every pair $(s, l)$ such that $\delta_{a}(s, l)$ is undefined. Throughout the remainder of this paper, the term DFA refers to a total DFA.

\subsection{Graph Theory}
A directed graph is represented as a tuple $G = (X, \Delta)$, where $X$ is the node set and $\Delta \subseteq X \times X$ specifies directed edges. If $(x_i, x_j) \in \Delta$, then $x_j$ is called an out-neighbor of $x_i$.  The set of all out-neighbors of $x_i$ is written as $N_{x_i}$, referred to as ``neighbors" in this paper. Moreover, $N^h_{x_i}$ denotes the set of nodes that are reachable from $x_i$ in at most $h$ hops.


\section{Problem Formulation}
We consider a mobile robot operating in an unknown discrete environment. Initially, the robot only knows its initial state and the set of labels it may encounter, but it is unaware of the exact label of each state of the environment. Its objective is to accomplish an scLTL task. The robot is assumed to have perfect localization capabilities, which means it always knows its current state. Additionally, it is equipped with sensors that allow it to reveal the labels of previously unknown states. By exploring the environment, the robot can uncover and maintain a history of the labels associated with each state. Based on this setting, we consider the following problem:
\begin{problem}
Given a mobile robot operating in an unknown discrete environment as a TS $\mathcal{T} = (X, \Sigma, \delta, O, L, w)$, where the labeling function $L$ is initially unknown, the objective is to compute a finite trajectory $\bm{x^{\sigma[0:n]}}$ that satisfies a desired scLTL specification $\phi$ over $O$, if such a trajectory exists; otherwise, determine that the specification is unsatisfiable.
\label{problem}

\end{problem}
\section{Solution Approach}

Our proposed solution to Problem \ref{problem} consists of online and offline phases.
In the offline phase, we identify states that allow progress toward task satisfaction without negative consequences. In the online phase, we use this information within a frontier-based exploration algorithm. 
This strategy ensures decision-making that accounts for the long-term effects of progress and gradually guides the robot to satisfy the task.

\subsection{Offline}
As motivated in Section \Romannum{1}, not every mission progression is necessarily beneficial. In some cases, certain progress steps could restrict the ability to satisfy the task through alternative paths. In other words, a robot may commit to a progress that limits future options, whereas avoiding that progress could have enabled the task satisfaction. In this phase, our objective is to identify such progress steps that carry these consequences. To formalize this notion, we first define the concept of a DFA with an altered initial state \cite{ayala2013temporal}:

\begin{definition}[DFA with altered initial state]
For a DFA $\mathcal{A}$ and a state $s \in S$, $\mathcal{A}(s)$ denotes a version of $\mathcal{A}$ where the initial state is set to $s_0 = s$.
\label{dfa-altered}
\end{definition}

Now we formally introduce a new type of DFA state called \emph{commit state} as follows:

    

\begin{definition}[Commit state]
    
 A non-trash and non-accepting state $s \in S \setminus(\{s_t\} \cup F_a)$ is defined as a \textbf{commit state} if $\mathscr{L}_{\mathcal{A}} \nsubseteq \mathscr{L}_{\mathcal{A}(s)}
$. We denote the set of all commit states as $S_c$.
\label{commit}
\end{definition}

Definition~\ref{commit} implies that there exists an accepting word $\bm{l} \in \mathscr{L}_{\mathcal{A}}$ such that the word $\bm{l}$ is not accepting when started from the commit state $s$, i.e., $\bm{l} \notin \mathscr{L}_{\mathcal{A}(s)}$. 

\begin{example}
Figure~\ref{fig:dfa_example} illustrates the DFA corresponding to the specification
$ \phi_0 = (\neg \textbf{b} \,\mathcal{U}\, \textbf{a}) \lor (\neg \textbf{a} \,\mathcal{U}\, \textbf{b} \land \lozenge \textbf{c})$, which means ``not observing \textbf{b} before \textbf{a} or not observing \textbf{a} before \textbf{b} and eventually observing \textbf{c}". Now, consider the word $\bm{l} = \textbf{a}$. Starting from the initial state $s_0=3$ and following $\bm{l}$, the run ends in the accepting state $s=0$ implying $\bm{l} \in \mathscr{L}_{\mathcal{A}}$. In contrast, when starting from $s =1$ and following $\bm{l}$, the run does not reach an accepting state, i.e., $\bm{l} \notin \mathscr{L}_{\mathcal{A}(1)}$. Consequently, we have $\mathscr{L}_{\mathcal{A}} \nsubseteq \mathscr{L}_{\mathcal{A}(1)}$, and $s=1$ is a commit state according to Def.~\ref{commit}. Overall, this would mean that while observing $\textbf{a}$ would suffice to satisfy the task, making mission progress towards $s=1$ eliminates this possibility.
\end{example}
 

\vspace{-10mm}
\begin{figure}[!htbp]
    \centering   \includegraphics[width=0.37\textwidth]{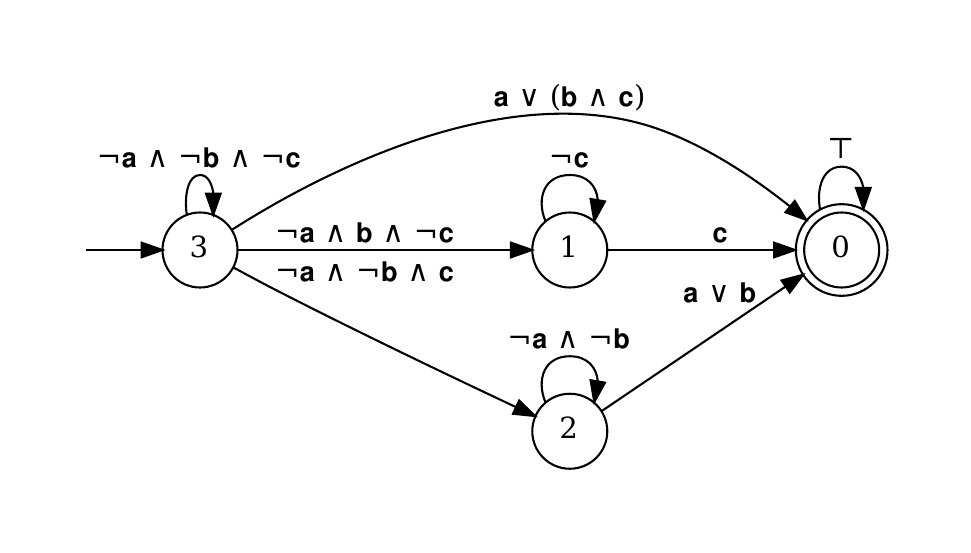}
    \vspace{-8mm}
    \caption{\scriptsize The DFA $\mathcal{A}$ corresponding to formula $\phi_0$. }
    \label{fig:dfa_example}
\end{figure}


To determine whether there exists a word that $\mathcal{A}$ accepts while $\mathcal{A}(s)$ does not, one must evaluate every word accepted by $\mathcal{A}$ on $\mathcal{A}(s)$ to verify acceptance. To address this, we propose to create a special product automaton obtained by computing the product of the DFA $\mathcal{A}$ with itself. This automaton allows the behaviors of $\mathcal{A}$ and $\mathcal{A}(s)$ to be analyzed in parallel over all words accepted by $\mathcal{A}$. By constructing this special product automaton, the problem of exhaustively checking all possible words (brute force) is transformed into a reachability analysis problem.

\begin{definition}[Product Automaton (DFA-DFA)]
    
Given a DFA $\mathcal{A}$ constructed from the specification $\phi$, we define the \textit{product automaton} as a tuple
$\bar{\mathcal{A}} = \mathcal{A} \times \mathcal{A}= (\bar{S}, \bar{S}_0,2^O,\bar{\delta},\mathcal{G})$, where:
\begin{itemize}

    \item $\bar{S} = S \times S$ is the state space,
    \item $\bar{S}_0 = s_0 \times (S \setminus(\{s_t\} \cup F_a))$ is the set of initial states,
    \item $2^O$ is the alphabet,
    \item $\bar{\delta}: \bar{S}\times 2^{O}\rightarrow\;\bar{S}$ is the transition function such that 
    $\bar{\delta}\bigl((s,s'),l\bigr) = \bigl(\delta_{a}(s,l),\,\delta_{a}(s',l)\bigr)$ where $s, s' \in S$.
    \item $\mathcal{G} = F_a \times (S \setminus F_a$) is the target set.
\end{itemize}
\label{commit_product}
\end{definition}


    Consider the sequence $\bm{\bar{s}} = \bar{s}(0)\dots \bar{s}(n+1)$ with $\bar{s}(0)\in \bar{S}_0 $ and $\bar{s}(n+1)\in \mathcal{G}$, which is the run of $\bar{\mathcal{A}}$ over a word $\bm{l} = l(0)\dots l(n)$.
    Since each $\bar{s} = (s,s') \in \bar{S}$ consists of two components, with $s, s' \in S$, the run can be viewed as two parallel sequences of DFA states: the sequence of first components $s(0) \dots s(n+1)$ with $s(0)= s_0$ and $s(n+1)\in F_a$, and the sequence of second components $s'(0) \dots s'(n+1)$ with $s'(0) = s$ and $s'(n+1)\in S \setminus F_a$. This implies that $\bm{l} \in \mathscr{L}_{\mathcal{A}}$ while $\bm{l} \notin \mathscr{L}_{\mathcal{A}(s)}$. Hence, by Def.~\ref{commit}, state $s\in S$ is classified as a commit state.
To determine the set of commit states, we proceed as follows. For each initial state $(s_{0}, s) \in \bar{S}_0$, we examine whether there exists a path in $\bar{\mathcal{A}}$ leading to some state in $\mathcal{G}$. If such a path exists, then $s$ is identified as a commit state, i.e., $s \in S_c$. 

\begin{example}
(cont.)  Consider the product automaton $\bar{\mathcal{A}} = \mathcal{A} \times \mathcal{A}$ constructed from the DFA $\mathcal{A}$ in Fig.~\ref{fig:dfa_example}. We will illustrate two portions of $\bar{\mathcal{A}}$ in Figs.~\ref{fig:product_example_commit} and \ref{fig:product_example_regular}, which are restricted to the states reachable from $\bar{s}_0 = (3,1)$ and $\bar{s}_0' = (3,2)$, respectively. Note that $\mathcal{G}=\{(0,1), (0,2), (0,3)\}$ according to Def.~\ref{commit_product}. In Fig.~\ref{fig:product_example_commit}, a state $\bar{s} = (0,1) \in \mathcal{G}$ is reachable from the initial state $\bar{s}_0 = (3,1)$, indicating that state $s = 1$ is a commit state as discussed earlier. In Fig.~\ref{fig:product_example_regular},  none of the states in $\mathcal{G}$ are reachable from the initial state $\bar{s}_0' = (3,2)$. This demonstrates that state $s = 2$ is not a commit state.

\end{example}
 
\vspace{-6mm}
\begin{figure}[!htbp]
    \centering
    \includegraphics[width=0.40\textwidth]{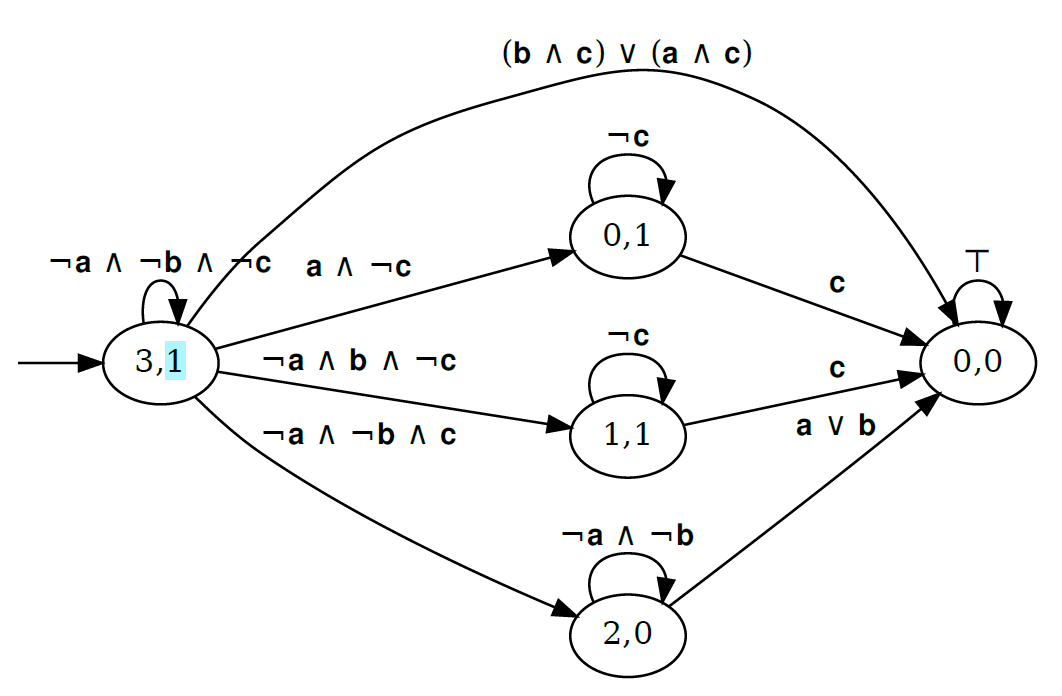}
    \vspace{-2mm}
    \caption{\scriptsize A portion of $\bar{\mathcal{A}}$ showing all states reachable from the initial state $\bar{s}_0 = (3,1)$.}
    \label{fig:product_example_commit}
\end{figure}

\vspace{-6mm}
\begin{figure}[!htbp]
    \centering
    \includegraphics[width=0.37\textwidth]{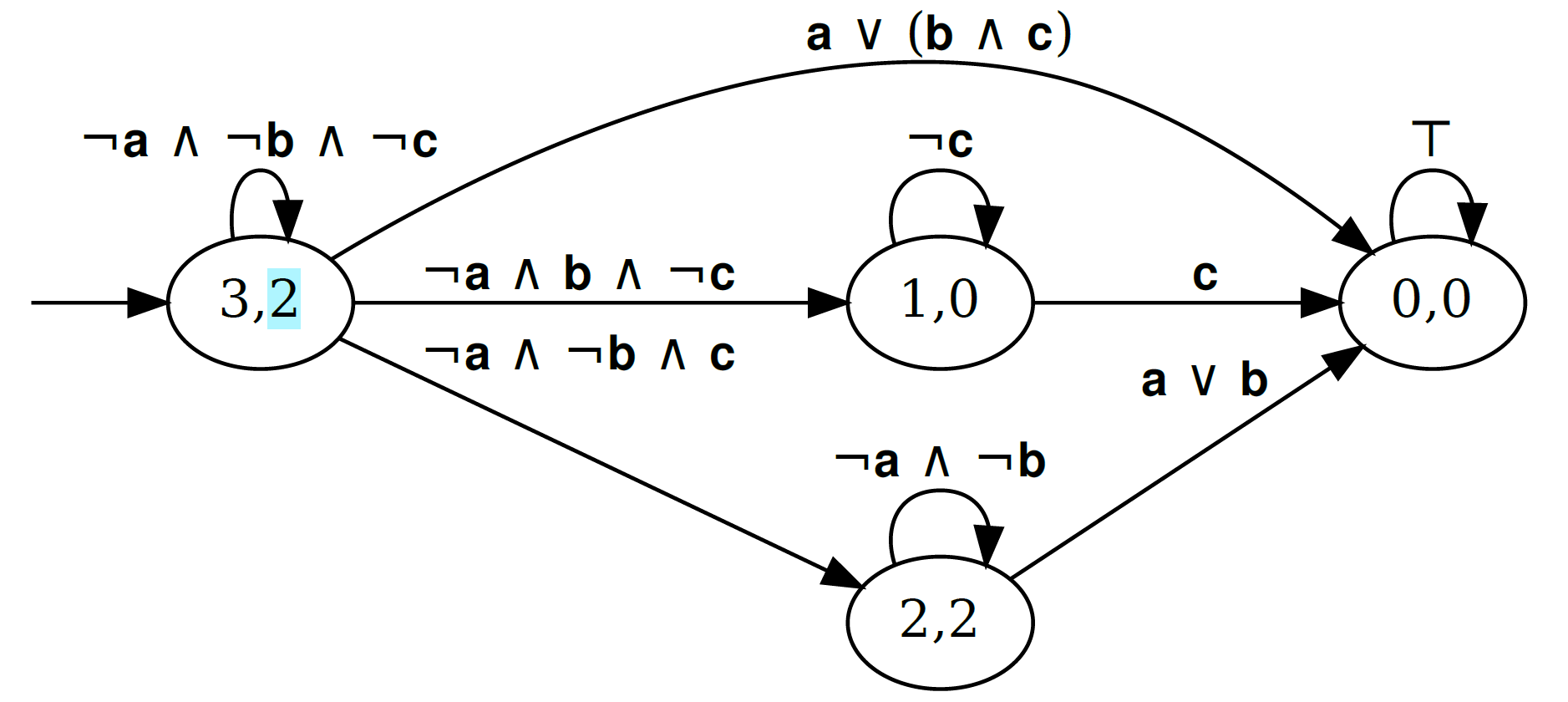}
    \vspace{-2mm}
    \caption{\scriptsize A portion of $\bar{\mathcal{A}}$ showing all states reachable from the initial state $\bar{s}'_0 = (3,2)$.}
    \label{fig:product_example_regular}
\end{figure}
\vspace{-4mm}

\subsection{Online}
We construct a product automaton from the TS $\mathcal{T}$ and the DFA $\mathcal{A}$ to jointly capture both the physical state evolution and the task progress. This product automaton forms the basis of our planning approach. Unlike the traditional product automaton construction, which is performed in a single step, our framework constructs the product automaton incrementally by using the set of known TS states.

\begin{ass}
Suppose that the robot is located in state $x \in X$. The robot's sensors, with sensing range $h \geq 1$, provide the labels $L(x)$ as well as $L(x')$ for all $x' \in N_{x}^h$. 
\label{sensor}
\end{ass}

Assumption \ref{sensor} is a mild assumption, as modern sensing technologies typically provide precise environmental measurements, which enables the robot to accurately determine state labels within its sensing range. The set of all TS states whose labels have been revealed under Assumption~\ref{sensor} is referred to as the set of \textbf{known states} and is denoted by $X_k\subseteq X$.


\begin{definition}[Product Automaton (TS-DFA)]
    Given a TS $\mathcal{T} = (X_k, \Sigma, \delta, O, L, w)$ and a DFA $\mathcal{A} = (S, s_0, 2^O, \delta_{a}, F_a, \{s_t\})$, we define their product automaton as the tuple $\mathcal{P} = \mathcal{T} \times \mathcal{A} = (S_p, S_{p_0}, \Sigma, \delta_p, p_p, \mathcal{F}_a, \mathcal{F}_t)$, where:

    \begin{itemize}
    \item $S_p = X_k \times S$ denotes the set of product states,
    \item $S_{p_0} = X_k \times \{s_0\} \subseteq S_p$ represents the set of initial states,
    \item $\Sigma$ is the finite set of actions,
    \item $\delta_p : S_p \times \Sigma \rightarrow S_p$ is the transition function defined as follows: for any $(x, s) \in S_p$ and $\sigma \in \Sigma$, $\delta_p((x, s), \sigma) = (x', s')$, where $\delta(x, \sigma) = x'$, $l = L(x')$, and $\delta_{a}(s, l) = s'$.

    \item $\mathcal{F}_a = X_k \times F_a$ is the set of accepting states,
    \item $\mathcal{F}_t = X_k \times \{s_t\}$ is the set of trash states.
    
\end{itemize}
\label{Pa}

\end{definition}

Reaching an accepting state over the product automaton ensures the satisfaction of $\phi$, whereas entering a trash state indicates a violation. Our objective is to compute a trajectory in $\mathcal{P}$ that enables the robot to reach an accepting state. However, achieving this requires knowledge of the labeling function $L$, which is unknown a priori as stated in Problem~\ref{problem}.

As the robot explores the environment, new labels are gradually revealed according to Assumption~\ref{sensor}. Consequently, additional states are added to the set of known states $X_k\subseteq X$, which leads to the incremental expansion of the product automaton with new states and edges determined by the labeling function $L$ and the transition function $\delta_p$.

Throughout the mission, at each time step, the robot evaluates reachability to an accepting state over $\mathcal{P}$. If no such state is reachable, the robot employs a frontier-based exploration strategy that prioritizes information gain while promoting task progress toward satisfaction without adverse consequences whenever possible, and consistently avoiding task violations.

\subsubsection{TL-Aware Frontier-Based Exploration}

In our approach, frontier-based exploration is executed when no accepting state in the product automaton $\mathcal{P}$ is reachable.
We define a frontier state as a known TS state, i.e., $x \in X_k$, that is adjacent to at least one unknown state $x' \in X \setminus X_k$.
To reach a frontier state, the robot computes its trajectory using the product automaton $\mathcal{P}$. A finite trajectory over $\mathcal{P}$ generated by an action sequence $\bm{\sigma{\scriptstyle[0:f-1]}}$ is represented as a sequence $\bm{s_p^{\sigma{\tiny[0:f-1]}}} = s_p(0)s_p(2)\dots s_p(f)$, where $\delta_p(s_p(i),\sigma(i)) = s_p(i+1)$ for all $i \in \{0,1,\dots,f-1\}$. For simplicity, we use $\bm{s_p}$ to denote $\bm{s_p^{\sigma{\tiny[0:f-1]}}}$ throughout the paper, except in cases where clarification is required. We denote the $i$-th element of the sequence $\bm{s_p}$ by $\bm{s_p}[i]$ and the final element by $\bm{s_p}[\text{f}]$, where $\bm{s_p}[i] = (x(i), s(i))$ and $\bm{s_p}[\text{f}] = (x(\text{f}), s(\text{f}))$. We define the total weight of a product automaton trajectory $\bm{s_p}$ as  $ W_p(\bm{s_p}) = \sum_{i=0}^{f-1} w(x(i), \sigma(i)).
$


 In order to reach a frontier $x \in X_k$, the robot must reach a product automaton state $(x, s)$ that shares the same TS state $x$. Since the product automaton may contain multiple states corresponding to the same TS state $x$ but with different DFA states, the robot can approach the same frontier via different trajectories. Each trajectory may have different implications for the task, such as no progress, partial progress, task violation, or task satisfaction, depending on the associated DFA state. By planning through the product automaton, the robot gains the ability to evaluate alternative ways of reaching a frontier while considering the impact on task progression. This approach contrasts with \cite{ayala2013temporal}, where the method selects the shortest path in the TS to a frontier and disregards that frontier if the path results in a task violation.

The objective of our frontier-based exploration is to select, among all frontier states, the one that maximizes information gain, minimizes travel distance, and promotes progress toward task satisfaction without introducing adverse consequences.

\paragraph{Information gain}
When the robot reaches a frontier, it uncovers the labels of certain states that were previously unknown. We define the information gain associated with a frontier state $x$, denoted by $\mathcal{I}(x)$, as the number of states whose labels are revealed upon visiting $x$, given the robot's sensing range $h$. Formally,  
\begin{equation}
\mathcal{I}(x) = \left| \mathcal{N}^{h}_{x} \cap \bigl(X \setminus X_k\bigr) \right|.
\end{equation}

\paragraph{Task progress metric}

 We introduce a task progress metric $\Omega(\bm{s_p})$ that is computed as:  

\begin{equation}\label{eq:reward}
    \Omega(\bm{s_p})
    =
    \small
    \begin{Bmatrix}
   -\infty & \text{      if } s(\text{f}) = s_t  \\
     -\frac{\alpha_1\cdot|X|}{\alpha_2} & \text{      if } s(\text{f}) \in S_c   \\
     \Delta_\phi( s(0),s(f) ) & \text{otherwise}
    \end{Bmatrix},
\end{equation}
where $s(\text{f})$ denotes the final DFA state in the trajectory $\bm{s_p}$, $s_t$ is the trash state, and $S_c$ is the set of commit states. The parameters $\alpha_1,\alpha_2>0$ are user-defined weighting factors that indicate the importance of information gain and task progress metric, respectively. Here, $\Delta_\phi: S \times S \rightarrow \mathbb{Z}$ is a function that quantifies progress toward task satisfaction by comparing the initial and final DFA states. In the literature, this computation is performed by finding the shortest path (in terms of hops) on a pruned version of the  DFA. In this pruned DFA, transitions requiring more than one observation to hold simultaneously are removed \cite{kantaros2022perception}. This function computes how much closer the final state is to an accepting state. A higher value of $\Delta_\phi$ indicates that the trajectory brings the robot closer to an accepting state.

By leveraging these criteria, we introduce an overall value that is assigned to each frontier: 
\begin{equation}
    V(x) = \max_{\bm{s_p}}\frac{\alpha_1 \cdot \mathcal{I}(x) + \alpha_2 \cdot \Omega(\bm{s_p}) }{W_p(\bm{s_p})^{\alpha_3}},
    \label{frontier-value}
\end{equation}
where $\mathcal{I}(x)$ denotes the information gain associated with frontier $x$, $\Omega(\bm{s_p})$ represents the task progress metric along the product automaton trajectory $\bm{s_p}$, and $W_p(\bm{s_p})$ is the corresponding total weight of that trajectory. The parameters $\alpha_1, \alpha_2,$ and $\alpha_3$ are positive weighting factors that balance the contribution of each term. The value function is designed to ensure that any trajectory leading to task violation is never selected ($\Omega(\bm{s_p})=-\infty$), thereby guaranteeing safety. Furthermore, if all trajectories to a given frontier lead the robot into a commit state, the resulting value for that frontier becomes negative, since $|X| > \mathcal{I}(x)$ for all $x \in X$. In contrast, other frontiers retain positive values. Consequently, this formulation forces the robot to avoid entering a commit state whenever alternative exploration options exist, while promoting task progress that has no consequences. An outline of the proposed method is illustrated in Fig.~\ref{fig:method}.

\vspace{-4mm}
\begin{figure}[!htbp]

  \centering
  \includegraphics[width=0.41\textwidth]
  {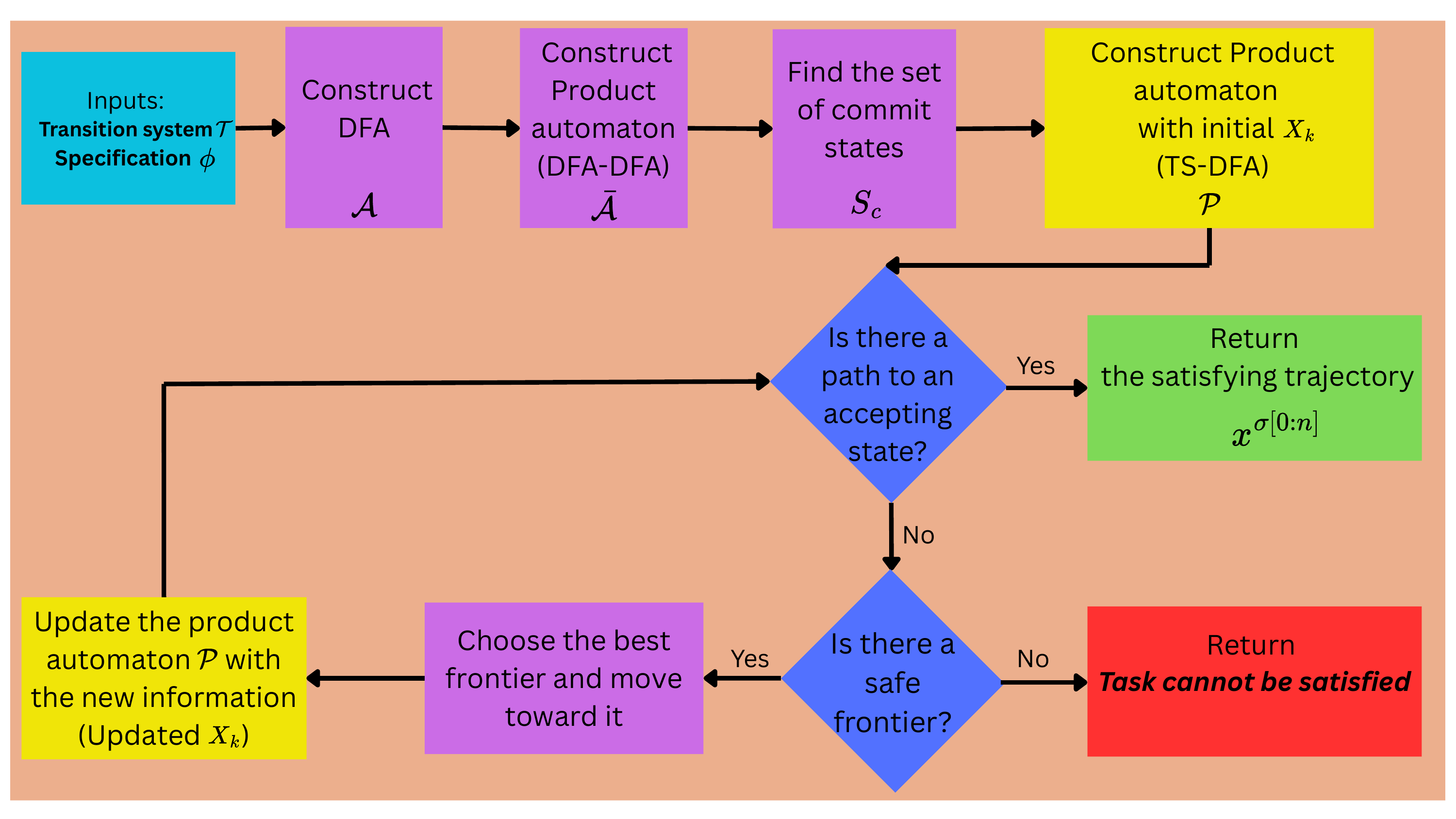}
  \caption{\scriptsize Summary of the proposed method.}
  \label{fig:method}
\end{figure}

\vspace{-2mm}
\subsubsection{Algorithm}
\LinesNumbered
\begin{algorithm}[!b]
\scriptsize

\SetKwInput{Input}{Input}
\SetKwInput{Output}{Output}

\caption{TL-Aware Frontier-Based Exploration}
\label{alg}

\Input{$\mathcal{M} = (X, \Sigma, \delta, O, L, c)$, \text{initial state }$x_0$, \text{set of commit states} $S_c$, $X_k$, $\mathcal{A} = (S, s_0, 2^O, \delta_{a}, F_a, \{s_t\})$,  }
\Output{A trajectory $\bm{x^{\sigma[0:n]}}$ satisfying $\phi$ }

\text{Construct $\bar{\mathcal{A}}= \mathcal{A} \times \mathcal{A}$ and calculate $S_c$ as per Def. \ref{commit}}

\text{Construct $\mathcal{P}=(S_p, S_{p_0}, \Sigma, \delta_p, p_p, \mathcal{F}_a, \mathcal{F}_t)$ as per Def. \ref{Pa} }

$\bm{x^{\sigma[0:n]}} = [\ ] $, ${s_{p}}_\text{current} =(x_0,s_0)$

\While{$\mathcal{F}_a$ is not reachable from ${s_{p}}_\text{current}$}{

Set $\mathscr{F}$ as the current set of all frontiers

\If{$\mathscr{F}=\emptyset$}{ \Return{Task cannot be satisfied }}

$V^*= -\infty$ and $\bm{s_p}^* = [\ ] $

 \ForEach{ $x\in \mathscr{F}$}{
         calculate $V(x)$ and $\bm{s_p}$ as per \eqref{frontier-value}
         
         \If{ $V(x)>V^*$ }{
         $V^* = V(x)$ and  $\bm{s_p}^* =\bm{s_p}$

        }

}

$V_{max}=V^*$

\If{$V_{max} = -\infty$}{\Return{Task cannot be satisfied }}

Follow $\bm{s_p}^*$, updating $X_k$, $L$, $\mathcal{P}$, $\bm{x}^{\sigma[0:n]}$ and ${s_{p}}_\text{current}$

}

Compute path to $\mathcal{F}_a$ minimizing total weight, updating $\bm{x}^{\sigma[0:n]}$




    
            
    
        
\Return{ $\bm{x^{\sigma[0:n]}}$ }
\end{algorithm}

We present our algorithm for solving Problem \ref{problem}. In line~1, the set of commit states $S_c$ is computed based on the specification $\phi$. Using the set of known states, the initial product automaton is then constructed. Lines~4--16 implement the frontier-based exploration procedure, which is triggered whenever no path to an accepting state exists in the current product automaton. Specifically, the set of all frontiers is first collected in $\mathscr{F}$, and the frontier with the highest value is selected. If no frontiers remain, or if all remaining frontiers lead to task violation, the algorithm concludes that the task is unsatisfiable. Otherwise, the robot follows the trajectory to the selected frontier and updates the product automaton with the newly discovered information. This process continues until a path to an accepting state is found, at which point the algorithm returns the traversed sequence of states $\bm{x^{\sigma[0:n]}}$.

\begin{ass}
If $-\infty<V_{max} < 0$ at anytime during the execution of Alg.~\ref{alg}, then from every state $s_p \in S_p \setminus \mathcal{F}_t$ there exists a path to some accepting state in $\mathcal{F}_p$.

     \label{entering_commit}

\end{ass}


Assumption~\ref{entering_commit} excludes scenarios where a satisfying path exists but can only be found by taking a risk. In other words, if the robot has explored part of the map without finding an accepting path, and the only remaining way to explore the rest of the map is by entering a commit state ($-\infty<V_{max}<0$), then entering that commit state will not eliminate the reachability to the set of accepting states.

Next, we establish through the following theorem that Alg.~\ref{alg} satisfies the properties of soundness and completeness.
\begin{theorem}
Let Assumptions \ref{sensor} and \ref{entering_commit} hold. Then, Alg.~\ref{alg} is sound and complete.
\label{theo}
\end{theorem}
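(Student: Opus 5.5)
The proof splits into soundness and completeness, with termination proved first because both parts use it.

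\textbf{Termination.} Every iteration of the while loop (lines 4--16) either returns or moves the robot to a frontier $x$ with $V(x)>-\infty$. By definition a frontier is adjacent to at least one unknown state. Since $h\geq 1$, Assumption~\ref{sensor} guarantees that reaching $x$ reveals at least one new label, so $|X_k|$ strictly increases. As $X$ is finite, after at most $|X|$ iterations one of three things has happened: $\mathcal{F}_a$ has become reachable, $\mathscr{F}=\emptyset$, or $V_{max}=-\infty$.

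\textbf{Soundness.} Suppose the algorithm returns a trajectory. We show it satisfies $\phi$. Every executed segment $\bm{s_p}^*$ has $V>-\infty$, so $\Omega(\bm{s_p}^*)\neq-\infty$ and its final DFA state is not $s_t$. Because $s_t$ is absorbing in the total DFA, no earlier state of the segment can be $s_t$ either. Labels along the executed path are already known, so $\mathcal{P}$ reproduces exactly the DFA run of the actual word. The final step follows a path in $\mathcal{P}$ to $\mathcal{F}_a$. Concatenating the segments, the word $\bm{l}(\bm{x^{\sigma[0:n]}})$ drives $\mathcal{A}$ from $s_0$ into $F_a$. Hence it is a good prefix, and $\bm{x^{\sigma[0:n]}}$ satisfies $\phi$.

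\textbf{Completeness.} We show that whenever a satisfying trajectory exists in $\mathcal{T}$, the algorithm returns one rather than ``Task cannot be satisfied''. By termination it suffices to rule out the two failure returns, by contradiction. Two invariants should hold at every iteration.

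\emph{Invariant (I).} The current product state $s_{p,\text{current}}=(x,s)$ can still reach $X\times F_a$ in the full, true product $\mathcal{T}\times\mathcal{A}$. Initially this holds because a satisfying trajectory exists. While the robot avoids commit states, we argue as follows. If $s\notin S_c$, then $\mathscr{L}_{\mathcal{A}}\subseteq\mathscr{L}_{\mathcal{A}(s)}$. Since $\mathcal{T}$ is deterministic, the satisfying trajectory's word, appended to a path from $x$ back to its start, would ideally still be accepted from $s$. This requires reachability between states, which is the delicate point. If it fails, I would use the product paths directly rather than words.

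\emph{Invariant (II).} The robot enters a commit state only when $-\infty<V_{max}<0$. The value construction shows that any trajectory ending in a commit state has a negative value, while non-commit frontiers have positive values. So commit states are entered only when no other option exists, and in that case Assumption~\ref{entering_commit} guarantees that every non-trash product state can still reach $\mathcal{F}_a$. Hence (I) is preserved.

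\textbf{Ruling out failure.} Now suppose a failure return occurs. Take the true accepting path from $s_{p,\text{current}}$ given by (I). Since $\mathcal{F}_a$ is not reachable in the known $\mathcal{P}$, this path must leave $X_k$. Its last product state $(x',s')$ before the first unknown TS state has $x'$ a frontier, so $\mathscr{F}\neq\emptyset$. The prefix up to $(x',s')$ lies entirely in known states, avoids $s_t$ (since it extends to acceptance), and therefore gives $V(x')>-\infty$. This contradicts $V_{max}=-\infty$.

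The main obstacle is the step that preserves (I) across non-commit moves, namely turning the language-inclusion definition of non-commit states into reachability of acceptance in the true product. If a clean argument is unavailable, the fallback is to apply Assumption~\ref{entering_commit} more broadly, or to argue that frontiers reached without commitment still admit an accepting continuation via the inclusion $\mathscr{L}_{\mathcal{A}}\subseteq\mathscr{L}_{\mathcal{A}(s)}$ applied to words generated from the current location.
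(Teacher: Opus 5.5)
Your skeleton is the paper's: termination from the finiteness of $X$ and the growth of $X_k$, soundness from the final path into $\mathcal{F}_a$, and Assumption~\ref{entering_commit} to cover moves into commit states. Your soundness step (using that $s_t$ is absorbing) and your frontier-extraction argument against the two failure returns are more careful than anything in the paper. But the gap you flag in Invariant~(I) is genuine, and the language-inclusion route cannot close it. All that $\mathscr{L}_{\mathcal{A}}\subseteq\mathscr{L}_{\mathcal{A}(s)}$ gives you is this: if some path starting at the robot's \emph{current} location $x$ has a word in $\mathscr{L}_{\mathcal{A}}$, then $(x,s)$ reaches $X\times F_a$. So for non-commit $s$, (I) reduces to ``$\phi$ is satisfiable starting from $x$''. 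That is a property of $\mathcal{T}$ which $S_c$ does not see. The satisfying trajectory you know of starts at $x_0$, and a route back to $x_0$ may not exist, or may read labels that move the DFA before that word is read. (Your (II) is also overstated. $\Omega$ inspects only $s(\text{f})$, so what actually holds is that the \emph{endpoint} of the selected trajectory is a commit state only if $-\infty<V_{max}<0$. Intermediate commit states can still occur, and $\Delta_\phi$ can be negative, so everything again rests on (I) for non-commit endpoints.)

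In fact no argument can close this from the stated hypotheses. Take $\phi=\lozenge a$. Then $S\setminus(\{s_t\}\cup F_a)=\{s_0\}$, so $S_c=\emptyset$, and every frontier reachable in $\mathcal{P}$ has $\Omega=0$ and $\mathcal{I}\geq 1$, hence $V>0$; Assumption~\ref{entering_commit} is therefore vacuous. Let $x_0$ have a two-way corridor leading to an $a$-state beyond sensing range. Let it also have a one-way transition into a large $a$-free region whose frontier offers more information gain at equal cost; this is allowed for a general deterministic $\delta$, and it is exactly the downhill terrain of the introduction. Algorithm~\ref{alg} enters the region and exhausts it. It is then left with the corridor frontier, to which $\mathcal{P}$ has no path, so it reports that the task cannot be satisfied (line 7 or 15), although following the corridor to the $a$-state satisfies $\phi$. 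The paper's proof contains no idea that rescues this: in subcase (i) it simply asserts that moving to a non-commit frontier preserves reachability of the accepting set, and it invokes Assumption~\ref{entering_commit} only in subcase (ii). Your own fallback, applying Assumption~\ref{entering_commit} more broadly, is the right repair, but it is a change of hypothesis rather than a proof step. If you require its conclusion whenever $V_{max}>-\infty$, not only when $V_{max}<0$, then (I) holds outright and the rest of your argument goes through as written.
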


\begin{proof}

 At any time during the execution of Alg.~1, one of the following two cases occurs:

\begin{enumerate}
    \item From the current state ${s_{p}}_\text{current}$, no accepting state is reachable in $\mathcal{P}$ (while-loop, line~4).

    In this case, there are two sub-cases regarding the existence of frontiers:
    \begin{enumerate}[label=(\alph*)]
        \item[1.a)] $\mathscr{F} = \emptyset$: The robot has explored the entire environment but has not found a satisfying path. From line~7, the algorithm terminates and returns that the task cannot be satisfied.
        
        \item[1.b)] $\mathscr{F} \neq \emptyset$: Three further subcases arise after line~12:
        \begin{enumerate}[label=(\roman*)]
            \item  $V_{max} \geq 0$: The robot selects a frontier that does not lead to a commit state. As the robot moves towards that frontier, Assumption~\ref{sensor} ensures that the size of the known state set $X_k$ increases (line~16).
            
            \item  $-\infty<V_{max}<0$: This means that the robot has already explored part of the environment, and all remaining frontiers lead to commit states. In this case, the size of the known state set $X_k$ also increases (line~16). Moreover, by Assumption~\ref{entering_commit}, a satisfying path to an accepting state exists from every state $s_p \in S_p \setminus \mathcal{F}_t$. Hence, entering a commit state under these conditions does not eliminate reachability to the set of accepting states.
            
            \item  $V_{max} = -\infty$: No safe frontier is reachable, and from line~15 the algorithm terminates, which returns that the task cannot be satisfied.
        \end{enumerate}
    \end{enumerate}
    
    Overall, in subcases (i) and (ii), where $\mathscr{F} \neq \emptyset$, the size of the known state set $X_k$ increases, while preserving the reachability to the set of accepting states. Since $\mathcal{T}$ contains a finite number of states, eventually $X_k = X$, which implies $\mathscr{F} = \emptyset$, unless some frontiers are unreachable due to potential task violations. In that event, $V_{max} = -\infty$, which leads to termination of the algorithm as in subcase (iii). Thus, in Case~1 the robot either explores a safe frontier or terminates when no safe frontiers are left, by returning that the task cannot be satisfied.
    
    \item From ${s_{p}}_\text{current}$, there exists a path to an accepting state in $\mathcal{P}$.

    This is the only case where the algorithm returns a trajectory. The trajectory produced at line~18 leads the robot to an accepting state in $\mathcal{P}$, which directly implies satisfaction of the scLTL specification $\phi$.
\end{enumerate}

\noindent In conclusion, if the algorithm outputs a trajectory, that trajectory satisfies the specification (\emph{soundness}). Furthermore, if a satisfying path exists, the robot will eventually find it by exhaustively visiting all safe frontiers (\emph{completeness}).
\end{proof}
    
\section{Case Studies}
\noindent \textbf{Simulation:} We consider a robot operating in a $20 \times 20$ grid environment, equipped with an action set 
$\Sigma = \{\text{Up},\text{Down}, \text{Right},  \text{Left}, \text{Stay}\}$ and a sensing range of \(h = 3\). 
The scLTL formula \(\phi\) is translated into its corresponding DFA using the Spot library~\cite{duret2016spot}. 
The parameters in~\eqref{frontier-value} are set to $\alpha_1 = 1$, $\alpha_2 = 20$, and $\alpha_3 = 1$. 
All computations are performed on an Intel Core i7 laptop (2.3 GHz, 16 GB RAM).

\subsection{Search and Rescue}
\begin{figure*}[t]
  \centering
  \includegraphics[width=\textwidth, trim=0 14cm 0 0, clip]{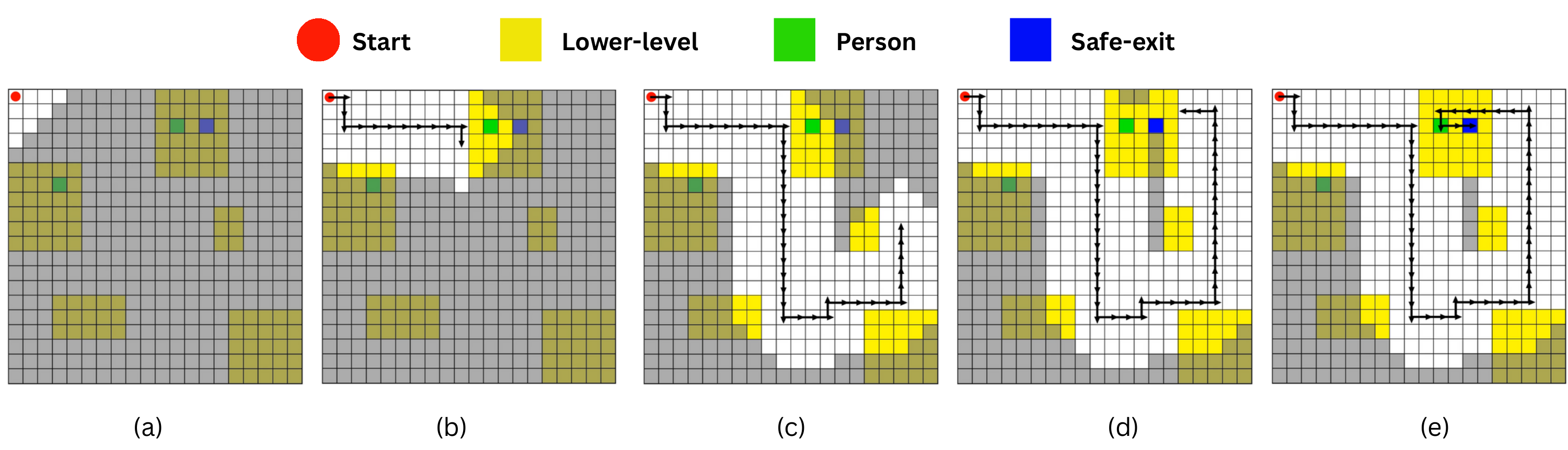}
  \caption{\scriptsize Snapshots from a video, starting at (a) and ending at (e), illustrating the robot executing our algorithm to satisfy the task $\phi_1$. The robot’s starting position is marked by a red circle in the top-left corner, while the lower-level region, person, and safe exit are represented by yellow, green, and blue squares, respectively. The robot’s traversed trajectory is shown by black arrows.}
  \label{fig:case1}
\end{figure*}

In this scenario, the robot is tasked with locating both a person and a safe exit within the environment, and subsequently rescuing the person by transporting them to the safe exit. In the environment, some regions may have slippery inclines. Once the robot enters such a region, it may descend to a lower level and be unable to return, which restricts its movement to that level. These regions are referred to as lower-level regions. This mission is specified by the scLTL formula: 
\begin{equation}
\phi_1 = (\neg \textbf{L}~ \mathcal{U} ~(\textbf{L} ~\mathcal{U}~ (\textbf{P}~ \mathcal{U}~ ( ~(\textbf{L}\lor \textbf{P})~ \mathcal{U}~ \textbf{S})) ))     
\land\lozenge \textbf{S} \land (\neg \textbf{S}~ \mathcal{U}~ \textbf{P}),
\end{equation}
where $\mathbf{P}$ denotes the person, $\mathbf{S}$ the safe exit, and $\mathbf{L}$ the lower-level region.

As in Fig.~\ref{fig:case1}, the robot begins its mission from the initial cell, marked with a red circle. Due to its limited sensing range, the robot can only observe the true labels of nearby cells, while cells with unknown labels are depicted in shaded grey. The robot proceeds to explore the environment in order to gather the information required to satisfy the specified task. In Fig.~\ref{fig:case1}(b), although the objective is to rescue the person and transport them to a safe exit, the robot does not immediately approach the person upon detection. This is because the person is located in a lower-level region, which is a one-way area from which the robot cannot return if no safe exit is available. Entering such a region is considered as a commit state in our planning algorithm; the robot avoids committing to that progress unless it can identify a satisfying trajectory beyond that point. Consequently, it prioritizes exploring other areas to seek alternative solutions. In Fig.~\ref{fig:case1}(d), the robot discovers that a safe exit exists within the lower-level region containing the person, thereby identifying a satisfying trajectory in the product automaton. Finally, as in Fig.~\ref{fig:case1}(e), the robot executes this trajectory and successfully completes the task. Note that our algorithm can be re-executed after each termination until every person, who can be saved, has been rescued.



\begin{figure}[!htbp]
    \centering
    \includegraphics[width=0.17\textwidth]{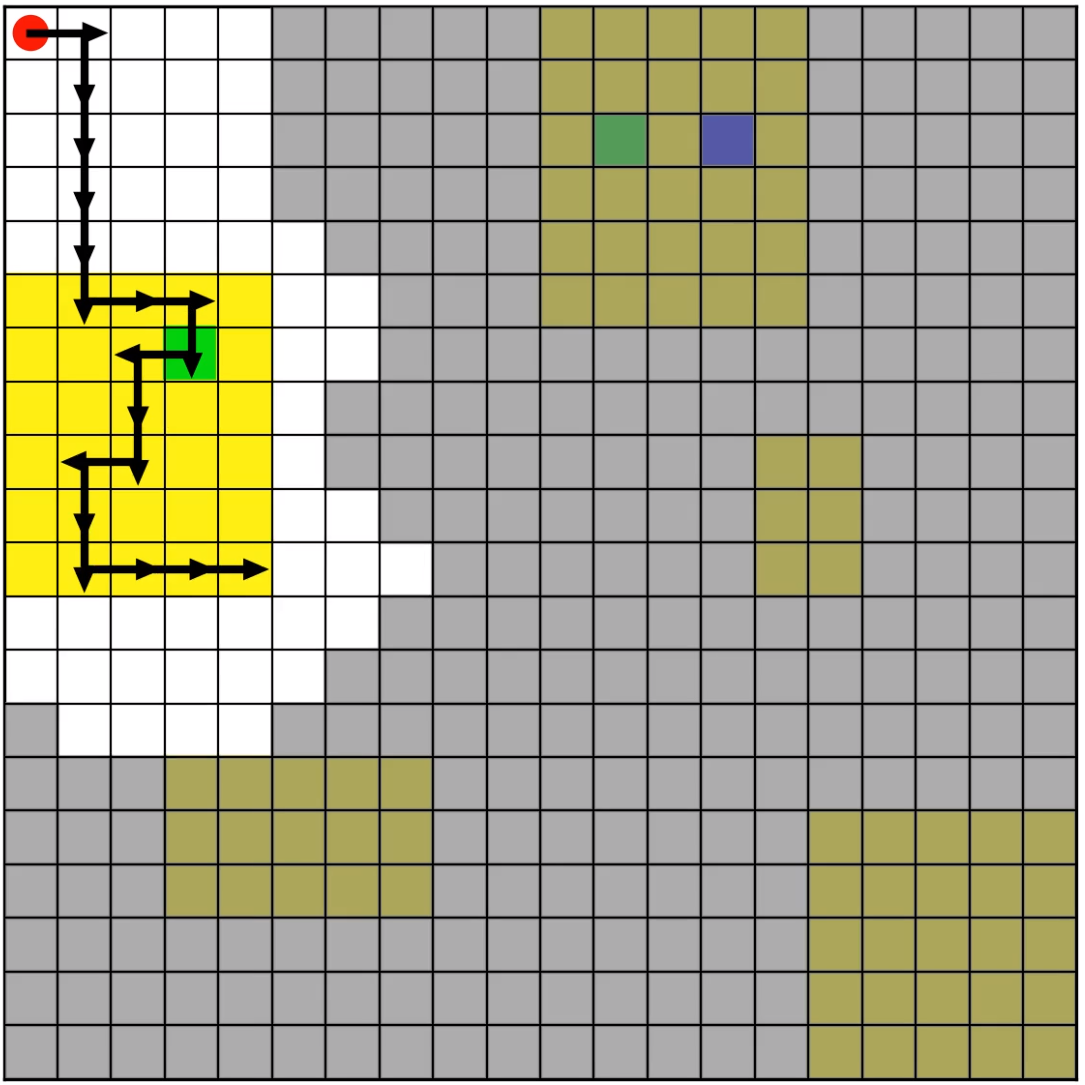}
    \caption{\scriptsize Trajectory traversed by the robot when following the method in \cite{ayala2013temporal}, which shows that it gets stuck and fails to satisfy the task.}
    \label{fig:case1_2}
\end{figure}

Figure~\ref{fig:case1_2} depicts the same scenario and map as in Fig.~\ref{fig:case1}. But, it illustrates the outcome when the planning approach \cite{ayala2013temporal} is employed. In this case, there is no mechanism to prevent the robot from entering the lower-level region. Once inside, it detects the person but the robot becomes trapped as no safe exit is available.
 Although the specification is not violated at this point, it becomes impossible to complete, which renders the mission unsatisfiable.
This highlights the importance of incorporating commit states into the planning process.



\begin{figure}[!b]
    \centering
    \begin{tabular}{ccc} 
        \begin{subfigure}[t]{0.28\columnwidth}
            \centering
            \includegraphics[width=\textwidth]{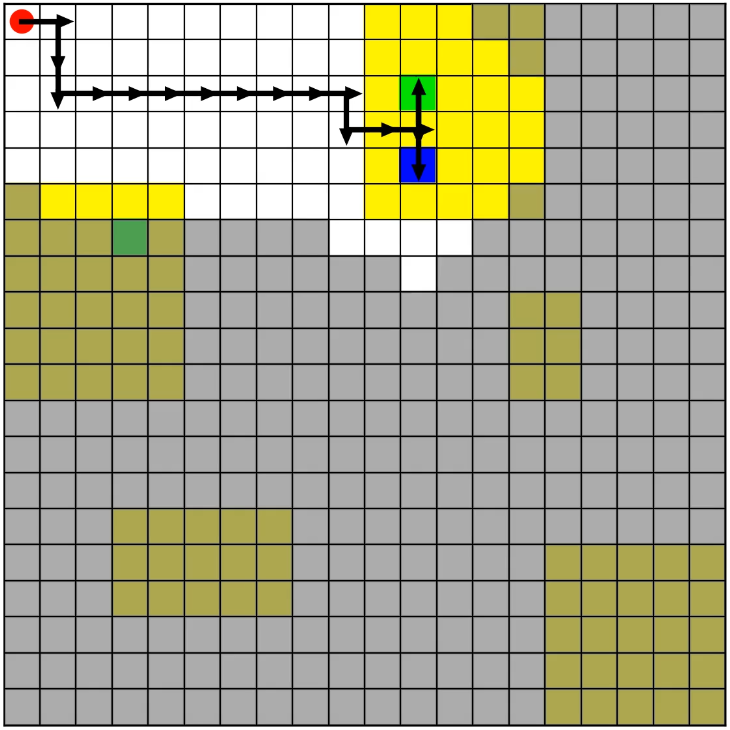}
            \caption{}
        \end{subfigure} &
        \begin{subfigure}[t]{0.28\columnwidth}
            \centering
            \includegraphics[width=\textwidth]{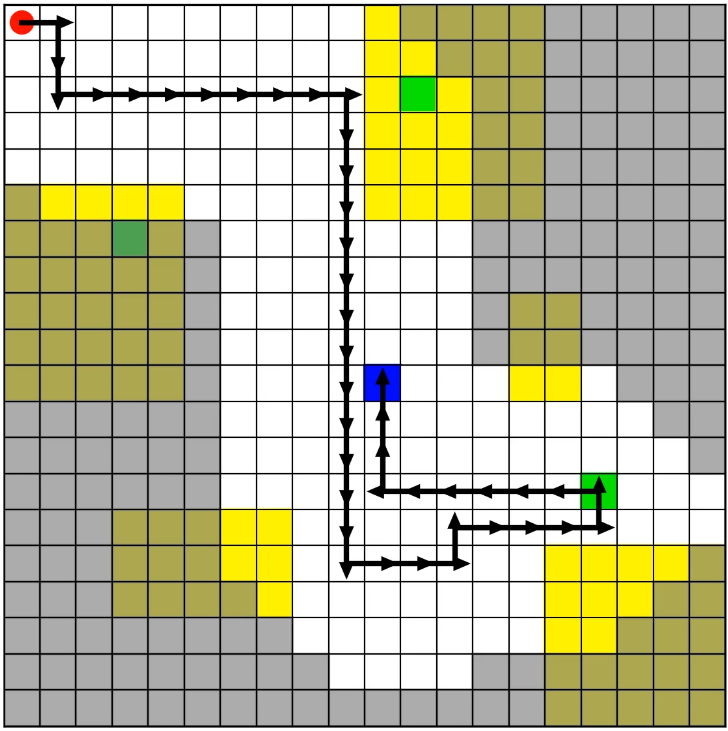}
            \caption{}
        \end{subfigure} &
        \begin{subfigure}[t]{0.28\columnwidth}
            \centering
            \includegraphics[width=\textwidth]{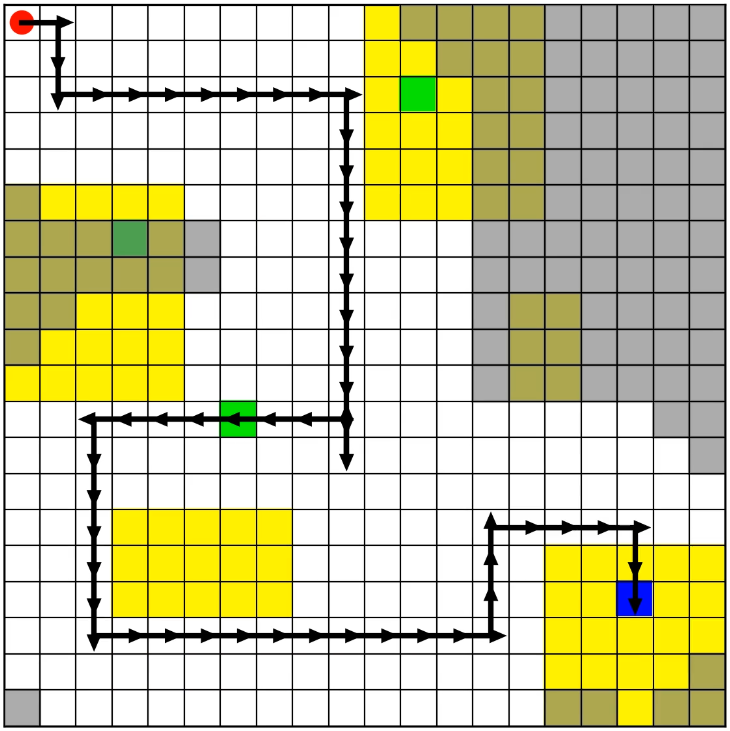}
            \caption{}
        \end{subfigure}
        
    \end{tabular}
    \caption{\scriptsize Trajectories generated by our algorithm that satisfy $\phi_1$ under different maps.}
\label{fig:case1_3}    
\end{figure}

Figure~\ref{fig:case1_3} presents the final traversed trajectories for the same specification under different map configurations by using our algorithm. In Fig.~\ref{fig:case1_3}(a), during the early stage of exploration, the robot discovers a safe exit located near the person in the lower-level area. It immediately terminates the frontier-based exploration and follows the path to satisfy the task. In Fig.~\ref{fig:case1_3}(b), the robot first identifies a safe exit and later encounters the person. When this happens, it stops the frontier-based exploration and follows a trajectory that first reaches the person and then proceeds to the safe exit. Finally, in Fig.~\ref{fig:case1_3} (c), the robot encounters the person midway through the mission and moves toward them, since this progress carries no future consequences (i.e., it does not correspond to a commit state). The robot then rescues the person by transporting them to the first safe exit it locates. Overall, as long as there exists a feasible path to satisfy the specification, our algorithm eventually guarantees satisfaction (Theorem~\ref{theo}), whereas other existing methods may fail in such scenarios, as in Fig.~\ref{fig:case1_2}.

\subsection{Benchmark analysis}

We compare our method with the approach proposed in \cite{ayala2013temporal} in terms of satisfaction rate and average trajectory length under the same task $\phi_1$. For map generation, we conduct a Monte Carlo simulation by generating 500 random grid maps of size $20 \times 20$. Labels are assigned by placing $\bm{n}$ randomly positioned blocks of size $5 \times 5$, each designated with the label $\textbf{L}$. In addition, two cells are labeled as $\textbf{P}$ and two cells as $\textbf{S}$. To satisfy Assumption~2, we ensure that at least one $\textbf{P}$ cell and one $\textbf{S}$ cell are located outside the $\textbf{L}$ blocks and remain reachable without traversing the $\textbf{L}$ regions.

As shown in Table~\ref{tab:table}, when we set the value of $\bm{n} = 0$, there are no $\textbf{L}$ regions in the environment. Therefore, product automaton states that have a commit state as their DFA state are not reachable. This follows from the specification, since the only way to reach a commit state is by observing $\textbf{L}$. In this case, both our algorithm and the method in \cite{ayala2013temporal} satisfy the specification in all 500 maps, which results in a $100\%$ satisfaction rate. In terms of average trajectory length, our algorithm with $46.20$ steps outperforms \cite{ayala2013temporal} with $56.47$ steps. This is because our frontier-based exploration incorporates information gain, which enables more efficient exploration and faster discovery of the desired labels. When we set $\bm{n} = 5$, our algorithm again outperforms \cite{ayala2013temporal}. It achieves a $100\%$ satisfaction rate, whereas their method achieves only $35\%$. This performance gap arises because their approach lacks a mechanism to prevent entering a commit state, which causes the robot to move into an $\textbf{L}$ region and become trapped without a reachable $\textbf{S}$ and $\textbf{P}$. Note that the lower average trajectory length reported for \cite{ayala2013temporal} occurs because, once the robot becomes trapped in an $\textbf{L}$ region with no safe frontier, the algorithm is terminated and the trajectory length is recorded. Thus, their average trajectory length reflects both successful and deadlock cases.

\begin{table}[!htbp]
    \centering
    \resizebox{0.85\columnwidth}{!}{%
    \setlength{\tabcolsep}{4pt} 
    \renewcommand{\arraystretch}{1.1} 
    \begin{tabular}{|c|c|c|c|}
        \hline
        $\bm{n}$ & \textbf{Method} & \textbf{Avg. Trajectory Length} & \textbf{Satisfaction Rate} \\ 
        \hline
        \multirow{2}{*}{$0$} & Our Algorithm & 46.20 & 100.0 \% \\ 
        \cline{2-4}
                               & \cite{ayala2013temporal} & 56.47 & 100.0 \% \\
        \hline
        \multirow{2}{*}{$5$} & Our Algorithm & 48.07 & 100.0 \% \\ 
        \cline{2-4}
                                         & \cite{ayala2013temporal} & 31.43 & 35.00 \% \\
        \hline
    \end{tabular}%
    } 
    \caption{\scriptsize Comparison between \cite{ayala2013temporal} and our proposed algorithm with respect to average trajectory length and success rate, evaluated over 500 maps.}
    \label{tab:table}
\end{table}
\section{Conclusion}\label{sec:conclusion}

In this paper, we present a framework for motion planning in unknown environments under syntactically co-safe Linear Temporal Logic (scLTL) tasks. The proposed algorithm encourages progress toward task satisfaction while avoiding commit states, which are states that restrict satisfaction to a limited set of paths. Our method combines frontier-based exploration with an automata-theoretic approach, which enables the selection of frontiers that balance information gain, travel distance, and progress toward task satisfaction. We demonstrate that if a satisfying path exists, our algorithm eventually finds it, and otherwise it correctly determines that the task is unsatisfiable. Through direct comparison with a state-of-the-art method, we show that our approach achieves better performance and succeeds in cases where the baseline method fails. For future work, we focus on extending this framework to continuous spaces and exploring collaborative strategies involving multiple robots.

\bibliographystyle{IEEEtran}
\bibliography{references}

\end{document}